\documentclass[envcountsame]{llncs}

\usepackage{times}
\usepackage{latexsym,amssymb,amsmath}
\usepackage{xspace}
\usepackage{graphicx}
\usepackage{misc}
\usepackage{mathabx}
\usepackage{color} 
\usepackage{named} 
\usepackage{mathrsfs}
\usepackage{tikz}
\usetikzlibrary{graphs}

\newcommand{\greif}[1]{\ensuremath{\bigocirc#1}}
\newcommand{\lreif}[1]{\ensuremath{\bigodot#1}}

\newcommand\restr[2]{{% we make the whole thing an ordinary symbol
  \left.\kern-\nulldelimiterspace % automatically resize the bar with \right
  #1 % the function
  \vphantom{\big|} % pretend it's a little taller at normal size
  \right|_{#2} % this is the delimiter
  }}
  
\newcommand{\Int}[1]{#1^{\Imc}\xspace}
\renewcommand\dom{\ensuremath{\Delta}\xspace}
\newcommand{\KB}{\ensuremath{\mathcal{KB}}\xspace}
\newcommand{\per}{\mathpunct{\mbox{\bf .}}}
\newcommand{\pth}[2]{\ensuremath{\textsc{path}_{\mathscr{T}}(#1,#2)}\xspace}
\newcommand{\chd}[2]{\ensuremath{\textsc{child}_{\mathscr{T}}(#1,#2)}\xspace}
%\newcommand{\KB}{\ensuremath{\matcal KB}\xspace}

%\renewcommand{\baselinestretch}{0.991}

%%%%%%%%%%%%%%%%%%%%%%%%%%%%%%%%%%%%%%%%%%%%%%%%%%%%%%%%%%%%%%%%%%%%%%
 \pdfinfo{
 /Title (Extending DLR with Labelled Tuples, Projections, Functional Dependencies and Objectification)
 /Author (Alessandro Artale, Enrico Franconi)}

\title{Extending \DLR with Labelled Tuples, Projections, Functional
  Dependencies and Objectification\\ (full version)}

\author{Alessandro Artale \and Enrico Franconi}
\institute{KRDB Research Centre, 
Free University of Bozen-Bolzano, Italy\\
\texttt{\{artale,franconi\}@inf.unibz.it}
}

\begin{document}

\date{}
\maketitle

%%%%%%%%%%%%%%%%%%%%%%%%%%%%%%%%%%%%%%%%%%%%%%%%%%%%%%%%%%%%%%%%%%%%%%

\begin{abstract}
We introduce an extension of the n-ary description logic \DLR to deal with attribute-labelled tuples (generalising the positional notation), with arbitrary projections of relations (inclusion dependencies), generic functional dependencies and with global and local objectification (reifying relations or their projections). We show how a simple syntactic condition on the appearance of projections and functional dependencies in a knowledge base makes the language decidable without increasing the computational complexity of the basic \DLR language.
\end{abstract}

%%%%%%%%%%%%%%%%%%%%%%%%%%%%%%%%%%%%%%%%%%%%%%%%%%%%%%%%%%%%%%%%%%%%%%
%%%%%%%%%%%%%%%%%%%%%%%%%%%%%%%%%%%%%%%%%%%%%%%%%%%%%%%%%%%%%%%%%%%%%%
\section{Introduction}

We introduce in this paper the language \DLRp which extends the $n$-ary description logics \DLR~\cite{calvanese:et:al:98b,BCMNP03} and \DLRID~\cite{CalvaneseGL01} as follows:

\begin{itemize}
\item the semantics is based on attribute-labelled tuples: an element of a tuple is identified by an attribute and not by its position in the tuple, e.g.,
the relation \texttt{Person} has attributes \texttt{firstname}, \texttt{lastname}, \texttt{age}, \texttt{height} with instance:\\
\texttt{$\langle$ firstname: Enrico, lastname: Franconi, age: 53, height: 1.90$\rangle$};
\item renaming of attributes is possible, e.g., to recover the positional semantics:\\ $\texttt{firstname,lastname,age,height}\looparrowright \texttt{1,2,3,4}$;
\item it can express projections of relations, and therefore inclusion dependencies, e.g.,
$\ATLEASTRS{\texttt{firstname,lastname}}\texttt{Student}\sqsubseteq\ATLEASTRS{\texttt{firstname,lastname}}\texttt{Person}$;
\item it can express multiple-attribute cardinalities, and therefore functional dependencies and multiple-attribute keys, e.g., the functional dependency from \texttt{firstname,} \texttt{lastname} to \texttt{age} in \texttt{Person} can be written as:\\
$\exists[\texttt{firstname,lastname}] \texttt{Person} \sqsubseteq$ 

\hspace{2em} $\exists^{\leq 1}[\texttt{firstname,lastname}](\exists[\texttt{firstname,lastname,age}] \texttt{Person})$;
\item it can express global and local objectification (also known as reification): a tuple may be identified  by a unique global identifier, or by an identifier which is unique only within the interpretation of a relation, e.g., to identify the name of a person we can write 
$\texttt{Name}\sqsubseteq\lreif{\exists[\texttt{firstname,lastname}] \texttt{Person}} $.
\end{itemize}

We show how a simple syntactic condition on the appearance of projections in the knowledge base makes the language decidable without increasing the computational complexity of the basic \DLR language. We call \DLRpm this fragment of \DLRp. \DLRpm is able to correctly express the UML fragment as introduced in~\cite{BeCD05-AIJ-2005,ACKRZ:er07} and the ORM fragment as introduced in~\cite{DBLP:conf/otm/FranconiM13}.

%%%%%%%%%%%%%%%%%%%%%%%%%%%%%%%%%%%%%%%%%%%%%%%%%%%%%%%%%%%%%%%%%%%%%%
%%%%%%%%%%%%%%%%%%%%%%%%%%%%%%%%%%%%%%%%%%%%%%%%%%%%%%%%%%%%%%%%%%%%%%

\section{Syntax of the Description Logic \DLRp}
\label{sec:syntax}

\begin{figure*}
	[t] 
	\begin{center}
		\renewcommand{\arraystretch}{1.2} $
		\begin{array}{r@{\hspace{2ex}}c@{\hspace{2ex}}l} 
			C & \to & \top\ \mid\ \bot\ \mid\ C\!N\ \mid\ \neg C\ \mid\ C_{1}\sqcap C_{2}\ \mid\ C_{1}\sqcup C_{2}\ \mid\ \EXISTR{q}{U_i} R\ \mid\ \greif{R}\ \mid\ \lreif{R\!N}\\
			R & \to & R\!N\ \mid\ R_1\setminus R_2\ \mid\ R_{1}\sqcap R_{2}\mid\ R_{1}\sqcup R_{2}\mid\ \selects{U_i}{C}{R}\ \mid\ \EXISTR{q}{U_1,\ldots,U_k} R\\
			\varphi & \to & C_1\sqsubseteq C_2\ \mid\ R_1\sqsubseteq R_2 \\
			\vartheta & \to & U_1 \looparrowright U_2 
		\end{array}
		$ 
		\renewcommand{\arraystretch}{1} 
	\end{center}
	\caption{\label{fig:dlrp} Syntax of \DLRp.} 
\end{figure*}

\begin{figure*}
	[t] 
	\begin{center}
		\renewcommand{\arraystretch}{1.2} $ { 
		\begin{array}{r@{\hspace{1ex}}l@{\hspace{3ex}}l@{\hspace{.3ex}}} 
			\tau(R_1\setminus R_2) = & \tau(R_1) & \text{if } \tau(R_1)=\tau(R_2)\\
			\tau(R_{1}\sqcap R_{2}) = & \tau(R_1) & \text{if } \tau(R_1)=\tau(R_2)\\
			\tau(R_{1}\sqcup R_{2}) = & \tau(R_1) & \text{if } \tau(R_1)=\tau(R_2)\\
			\tau(\selects{U_i}{C}{R}) = & \tau(R) & \text{if } U_i\in\tau(R)\\
			\tau(\EXISTR{q}{U_1,\ldots,U_k} R) = & \{U_1,\ldots,U_k\} & \text{if } \{U_1,\ldots,U_k\}\subset \tau(R)\\
			\tau(R) = & \emptyset & \text{otherwise} 
		\end{array}
		}$ 
		\renewcommand{\arraystretch}{1} 
	\end{center}
	\caption{\label{fig:syn:tau} The signature of \DLRp relations.} 
\end{figure*}

We first define the syntax of the language \DLRp. A \emph{signature}
in \DLRp is a triple
$\mathcal{L}=(\mathcal{C},\mathcal{R},\mathcal{U},\tau)$ consisting of
a finite set $\mathcal{C}$ of {\em concept} names (denoted by $C\!N$),
a finite set $\mathcal{R}$ of {\em relation} names (denoted by $R\!N$)
disjoint from $\mathcal{C}$, and a finite set $\mathcal{U}$ of {\em
  attributes} (denoted by $U$), and a \emph{relation signature}
function $\tau$ associating a set of attributes to each relation name,
$\tau(R\!N)=\{U_1,\ldots,U_n\}\subseteq \Umc$ with $n\geq 2$.

The syntax of concepts $C$, relations $R$, formulas $\varphi$, and
attribute renaming axioms $\vartheta$ is defined in
Figure~\ref{fig:dlrp}, where $q$ is a positive integer and $2\leq k < \textsc{arity}(R)$. We extend the signature function $\tau$ to arbitrary relations as specified in Figure~\ref{fig:syn:tau}. We define the \textsc{arity} of a relation $R$ as the number of the attributes in its signature, namely $\left|\tau(R)\right|$.

A \DLRp \emph{TBox} \Tmc is a finite set of formulas, i.e., \emph{concept inclusion} axioms of the form $C_1\sqsubseteq C_2$ and \emph{relation inclusion} axioms of the form $R_1\sqsubseteq R_2$.
\\
A renaming schema induces an equivalence relation $(\looparrowright,\mathcal{U})$ over the attributes $\mathcal{U}$, providing a partition of $\mathcal{U}$ into equivalence classes each one representing the alternative ways to name attributes. We write $[U]_\Re$ to denote the equivalence class of the attribute $U$ w.r.t. the equivalence relation $(\looparrowright,\mathcal{U})$. 
We allow only \emph{well founded} renaming schemas, namely schemas such that each equivalence class $[U]_\Re$ in the induced equivalence relation never contains two attributes from the same relation signature. 
In the following we use the shortcut $U_1\ldots U_n\looparrowright U'_1\ldots U'_n$ to group many renaming axioms, with the obvious meaning that $U_i\looparrowright U'_i$, for all $i=1,\ldots, n$.

A \DLRp knowledge base $\mathcal{KB}=(\mathcal{T},\Re)$ is composed by a TBox $\mathcal{T}$ and a renaming schema $\Re$.

%The selection expression $\selects{U_i}{C}{R\!N}$ denotes the
%relation $R\!N$ where the attribute $U_i$ is restricted to the
%concept $C$.
%%
%Global and local \emph{objectification} (also known as reification) of
%relations are denoted as $\greif{R}, \lreif{R\!N}$,
%respectively.
%%
%The \emph{unary projection} expression, $\EXISTR{q}{U_i} R$, denotes a
%concept expression as a generalisation with cardinalities of the unary
%projection operator over the attribute $U_i$ of the relation $R$; we
%denote with $\exists[U_i]R$ the plain unary projection. In \DLRp we
%can express $n$-ary projections (possibly with cardinalities) denoted
%as $\EXISTR{q}{U_1,\ldots,U_k} R$.
%%
%% With $\rho_{U_1/U'_1,\ldots,U_l/U'_l}R$ we denote the relation $R$
%% where attributes $\{U_1,\ldots,U_l\}$ are renamed as
%% $\{U'_1,\ldots,U'_l\}$. 

The renaming schema reconciles the attribute and the positional perspectives on relations (see also the similar perspectives in relational databases~\cite{AbiteboulHV95}). They are crucial when expressing both inclusion axioms and operators ($\sqcap,~\sqcup,~\setminus$) between relations, which make sense only over \emph{union compatible} relations. Two relations $R_1,R_2$ are union compatible if their signatures are equal up to the attribute renaming induced by the renaming schema $\Re$, namely, $\tau(R_1)=\{U_1,\ldots,U_n\}$ and $\tau(R_2)=\{V_1,\ldots,V_n\}$ have the same arity $n$ and $[U_i]_\Re=[V_i]_\Re$ for each $1\leq i\leq n$. Notice that, thanks to the renaming schema, relations can use just local attribute names that can then be renamed when composing relations.
Also note that it is obviously possible for the same attribute to appear in the signature of different relations.

\vspace{2ex}

To show the expressive power of the language, let us consider the following example with tree relation names $R_1, R_2$ and $R_3$ with the following signature:
\begin{align*}
  \tau(R_1)  &= \{U_1,U_2,U_3,U_4,U_5\}\\
  \tau(R_2)  &= \{V_1,V_2,V_3,V_4,V_5\}\\
  \tau(R_3)  &= \{W_1,W_2,W_3,W_4\}
\end{align*}
To state that $\{U_1,U_2\}$ is the \emph{multi-attribute key} of $R_1$ we add the axiom:
  \begin{align*}
    \exists[U_1,U_2] R_1 \sqsubseteq \exists^{\leq 1}[U_1,U_2] R_1
  \end{align*}
where $\exists[U_1,\ldots,U_k] R$ stands for $\exists^{\geq 1}[U_1,\ldots,U_k] R$. To express that there is a \emph{functional dependency} from the attributes $\{V_3,V_4\}$ to the attribute $\{V_5\}$ of $R_2$ we add the axiom:
\begin{align}\label{funct-dep}
      \exists[V_3,V_4] R_2 \sqsubseteq \exists^{\leq 1}[V_3,V_4](\exists[V_3,V_4,V_5] R_2)
\end{align}
The following axioms express that $R_2$ is a sub-relation of $R_1$ and
that a projection of $R_3$ is a sub-relation of a projection of $R_1$,
together with the corresponding axioms for the
renaming schema to explicitly specify the % exact behaviour
correspondences between the attributes of the two inclusion dependencies:
%2
\begin{align*}
  R_2 &\sqsubseteq R_1\\
  \exists[W_1,W_2,W_3] R_3 &\sqsubseteq \exists[U_3,U_4,U_5] R_1\\
  V_1V_2V_3V_4V_5 &\looparrowright U_1U_2U_3U_4U_5 \\
  W_1W_2W_3 &\looparrowright U_3U_4U_5
\end{align*}

%%%%%%%%%%%%%%%%%%%%%%%%%%%%%%%%%%%%%%%%%%%%%%%%%%%%%%%%%%%%%%%%%%%%%%
%%%%%%%%%%%%%%%%%%%%%%%%%%%%%%%%%%%%%%%%%%%%%%%%%%%%%%%%%%%%%%%%%%%%%%

\section{Semantics}  

\begin{figure*}
	[t] 
	\begin{center}
		\renewcommand{\arraystretch}{1.2} $ { 
		\begin{array}{r@{\hspace{1ex}}l@{\hspace{.3ex}}} 
			\Int{\top} = & \dom\\
			\Int{\bot} = & \emptyset\\
			\Int{(\neg C)} = & \Int{\top} \setminus \Int{C}\\
			\Int{(C_{1}\sqcap C_{2})} = & \Int C_{1} \cap \Int C_{2}\\
			\Int{(C_{1}\sqcup C_{2})} = & \Int C_{1} \cup \Int C_{2}\\
			\Int{(\EXISTR{q}{U_i} R)} = & \{d\in\dom\mid~\left|\{t\in\Int R\mid t[\rho(U_i)]=d\}\right| \lesseqgtr q \}\\
			\Int{(\greif{R})} = & \{d\in \dom \mid d=\imath(t) \land t\in \Int{R}\}\\
			\vspace{2ex}
			\Int{(\lreif{R\!N})} = & \{d\in \dom \mid d=\ell_{R\!N}(t)\land t\in \Int{R\!N}\}\\
			\Int{(R_1\setminus R_2)} = & \Int R_{1} \setminus \Int R_{2}\\
			\Int{(R_{1}\sqcap R_{2})} = & \Int R_{1} \cap \Int R_{2}\\
			\Int{(R_{1}\sqcup R_{2})} = & \{t\in\Int R_{1}\cup\Int R_{2}\mid \rho(\tau(R_1))= \rho(\tau(R_2))\}\\
			\Int{(\selects {U_i}{C}{R})} = & \{\parbox[t]{
			\textwidth}{$ t\in\Int{R} \mid t[\rho(U_i)]\in\Int{C}\} $} \\
			\Int{(\EXISTR{q}{U_1,\ldots,U_k} R)} = & \{ \parbox[t]{
			\textwidth}{$ \langle \rho(U_1):d_1,\ldots,\rho(U_k):d_k\rangle \in T_{\dom}(\{\rho(U_1),\ldots,\rho(U_k)\}) \mid~\vspace{0.5ex}\\
			\left|\{t\in\Int R \mid t[\rho(U_1)]=d_1,\ldots,t[\rho(U_k)]=d_k\}\right| \lesseqgtr q \} $} 
		\end{array}
		}$ 
		\renewcommand{\arraystretch}{1} 
	\end{center}
	\caption{\label{fig:sem:dlrp} Semantics of \DLRp expressions.} 
\end{figure*}

The semantics makes use of the notion of \emph{labelled tuples} over a domain set $\Delta$: a \emph{$\mathcal{U}$-labelled tuple over $\Delta$} is a function $t \colon \mathcal{U} \to \Delta$. For $U\in \mathcal{U}$, we write $t[U]$ to refer to the domain element ${d\in \Delta}$ labelled by $U$, if the function $t$ is defined for $U$ -- that is, if the attribute $U$ is a label of the tuple $t$. Given $d_1,\dots,d_n\in \Delta$, the expression ${\langle U_1\colon d_1,\ldots,U_n\colon d_n\rangle}$ stands for the $\mathcal{U}$-labelled tuple $t$ over $\Delta$ (tuple, for short) such that ${t[U_i]=d_i}$, for ${1\leq 1\leq n}$. % and the expression ${{U_i:d_i}\in t}$ stands for ${t[U_i]=d_i}$, for ${1\leq 1\leq n}$.
We write ${t[U_1,\ldots,U_k]}$ to denote the \emph{projection} of the tuple $t$ over the attributes ${U_1,\ldots,U_k}$, namely the function $t$ restricted to be undefined for the labels not in ${U_1,\ldots,U_k}$. The set of all $\mathcal{U}$-labelled tuples over $\Delta$ is denoted by $T_\Delta(\mathcal{U})$.

A \DLRp \emph{interpretation}, $\Imc = (\dom, \cdot^\Imc, \rho, \imath, \ell_{R\!N_1}, \ell_{R\!N_2},\ldots)$, consists of a nonempty \emph{domain} $\dom$, an \emph{interpretation function} $\cdot^\Imc$, a \emph{renaming function} $\rho$, a \emph{global objectification function} $\imath$, and a family of \emph{local objectification functions} $\ell_{R\!N_i}$, one for each named relation $R\!N_i\in\mathcal{R}$.

The renaming function $\rho$ for attributes is a total function ${\rho:\mathcal{U}\to\mathcal{U}}$ representing a canonical renaming for all attributes. We consider, as a shortcut, the notation $\rho(\{U_1,\ldots,U_k\}) = \{\rho(U_1),\ldots,\rho(U_k)\}$.
\\
The global objectification function is an injective function, ${\imath:T_{\dom}(\Umc) \to \dom}$, associating a \emph{unique} global identifier to each possible tuple.
%
% and\nb{A: dobbiamo re-introdurre} it is such that $\imath(\langle U:d \rangle)=d$, for any
% $U\in\Umc$ and $d\in\dom$, namely it identifies unary tuples with
% their value.
\\
The local objectification functions, ${\ell_{R\!N_i}:T_{\dom}(\Umc) \to \dom}$, are distinct for each relation name in the signature, and as the global objectification function they are injective: they associate an identifier -- which is unique only within the interpretation of a relation name -- to each possible tuple.
%and they are such that $\ell_{R\!N_i}(\langle U:d \rangle)=d$ for any $U\in\Umc$ and $d\in\dom$.
\\
The interpretation function $\cdot^\Imc$ assigns a set of domain elements to each concept name, $C\!N^{\mathcal{I}}\subseteq \dom$, and a set of $\Umc$-labelled tuples over $\dom$ to each relation name conforming with its signature and the renaming function: $$R\!N^{\mathcal{I}}\subseteq T_{\dom}(\{\rho(U)\mid U\in\tau(R\!N)\}).$$ 

The interpretation function $\cdot^\Imc$ is unambiguously extended over concept and relation expressions as specified in the inductive definition of Fig.~\ref{fig:sem:dlrp}.

An interpretation $\Imc$ satisfies a concept inclusion axiom $C_1\sqsubseteq C_2$ if $\Int C_1\subseteq \Int C_2$, it satisfies a relation inclusion axiom $R_1\sqsubseteq R_2$ if $\Int R_1\subseteq \Int R_2$, and it satisfies a renaming schema $\Re$ if the renaming function $\rho$ renames the attributes in a consistent way with respect to $\Re$, namely if
$$\forall U\per\rho(U)\in[U]_\Re\land\forall V\in [U]_\Re\per\rho(U)=\rho(V).$$

An interpretation is a \emph{model} for a knowledge base $(\mathcal{T},\Re)$ if it satisfies all the formulas in the TBox $\mathcal{T}$ and it satisfies the renaming schema $\Re$. We define \emph{KB satisfiability} as the problem of deciding the existence of a model of a given knowledge base, \emph{concept satisfiability} (resp. \emph{relation satisfiability}) as the problem of deciding whether there is a model of the knowledge base that assigns a non-empty extension to a given concept (resp. relation), and \emph{entailment} as the problem to check whether a given knowledge base logically implies a formula, that is, whenever all the models of the knowledge base are also models of the formula.
\\
For example, from the knowledge base $\mathcal{KB}$ introduced in the previous Section the following logical implication holds:
\begin{align*}
  \mathcal{KB}\models \exists[V_1,V_2] R_2 \sqsubseteq \exists^{\leq 1}[V_1,V_2] R_2
\end{align*}
i.e., the attributes $V_1,V_2$ are a key for the relation $R_2$.

\begin{proposition}
	The problems of {KB satisfiability}, {concept and relation satisfiability}, and {entailment} are mutually reducible in \DLRp. 
\end{proposition}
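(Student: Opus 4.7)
The plan is to establish a cycle of polynomial-time reductions with concept satisfiability as the hub, so that each of the four problems reduces to each of the others. I will first collect the easy links, and then address the one delicate step, which is forcing nonemptiness of a concept inside a KB.

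For the routine directions, KB satisfiability will coincide with concept satisfiability of $\top$ w.r.t.\ the same KB, since any interpretation has a nonempty domain and $\top^{\mathcal{I}} = \Delta$. Entailment reduces to concept or relation unsatisfiability by contraposition: $\mathcal{KB} \models C_1 \sqsubseteq C_2$ iff $C_1 \sqcap \neg C_2$ is not satisfiable w.r.t.\ $\mathcal{KB}$, and $\mathcal{KB} \models R_1 \sqsubseteq R_2$ iff $R_1 \setminus R_2$ is not satisfiable (union compatibility is guaranteed whenever $R_1 \sqsubseteq R_2$ is well-formed). In the opposite direction, concept satisfiability reduces to non-entailment via $C$ satisfiable iff $\mathcal{KB} \not\models C \sqsubseteq \bot$, and analogously for relations. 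Relation satisfiability reduces to concept satisfiability in two equivalent ways: $R$ is satisfiable iff the concept $\exists^{\geq 1}[U_i]\,R$ is satisfiable for any $U_i \in \tau(R)$, or equivalently iff $\greif{R}$ is satisfiable, since objectification preserves nonemptiness. Conversely, concept satisfiability will reduce to relation satisfiability by introducing a fresh relation $R_C$ with two fresh attributes $U_1, U_2$ and the axiom $R_C \sqsubseteq \selects{U_2}{C}{R_C}$, so that $R_C$ being satisfiable witnesses an element of $C$.

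The step I expect to be the main obstacle is reducing concept satisfiability to KB satisfiability, because a satisfiable KB need not have a specific concept $C$ interpreted nonempty, and \DLRp offers neither nominals nor an ABox to pin down witnesses. My plan is to enforce a witness without disturbing any existing axiom: introduce a fresh relation name $R_C$ with two fresh attributes $U_1, U_2$, and add the axioms $\top \sqsubseteq \exists^{\geq 1}[U_1]\,R_C$ and $R_C \sqsubseteq \selects{U_2}{C}{R_C}$ to $\mathcal{T}$. For soundness, any model $\mathcal{I}$ of $\mathcal{KB}$ with some $c \in C^{\mathcal{I}}$ will extend conservatively to a model of the enlarged KB by interpreting $R_C$ as $\{\langle U_1\colon x, U_2\colon c\rangle \mid x \in \Delta\}$; freshness of $R_C$ and its attributes guarantees that no other axiom, nor the well-foundedness of the renaming schema, is affected. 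For completeness, the first axiom forces $R_C$ nonempty because the domain is nonempty, and the second axiom then forces the $U_2$-component of any tuple in $R_C$ to lie in $C^{\mathcal{I}}$. All the reductions sketched above are polynomial in the input size, which will close the cycle and establish the proposition.
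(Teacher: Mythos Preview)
Your proposal is correct and follows essentially the same strategy as the paper: both use concept satisfiability as the hub, handle the routine directions via Boolean manipulations and the selection/existential-projection constructors, and force a witness for $C$ by introducing a fresh binary relation whose second component is constrained to lie in $C$. The only cosmetic differences are that the paper packages the witness-forcing step into a single axiom $\top \sqsubseteq \exists[U_1](P\sqcap \sigma_{U_2:C}P)$ rather than your two axioms, and for the concept-to-relation reduction it simply tests satisfiability of the relation expression $\sigma_{U_2:C}P$ directly (without adding your axiom $R_C \sqsubseteq \sigma_{U_2:C}R_C$, which is harmless but unnecessary since $P$ is fresh and unconstrained).
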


\begin{proof}
	We first show that we can reduce all the problems to concept satisfiability, where a concept $C$ is satisfiable iff $\KB \nvDash C\sqsubseteq \bot$. 
	\begin{itemize}
		\item \KB is satisfiable iff $\KB \nvDash \top\sqsubseteq \bot$; 
		\item $\KB \models C_1 \sqsubseteq C_2$ iff $\KB \models C_1 \sqcap \neg C_2\sqsubseteq \bot$; 
		\item $\KB \models R_1 \sqsubseteq R_2$ iff $\KB \models \exists[U](R_1\sqcap \neg R_2)\sqsubseteq \bot$, for some $U\in\tau(R_1)$; 
		\item $\KB \nvDash R \sqsubseteq \bot$ iff $\KB \nvDash \exists[U]R\sqsubseteq \bot$, for some $U\in\tau(R)$. 
	\end{itemize}
	Viceversa, we can show that concept satisfiability can be reduced to any other problem. First, note that concept satisfiability is already expressed as a logical implication problem. For the other cases, given a fresh new binary relation $P$, we have that 
	\begin{itemize}
		\item $\KB \nvDash C\sqsubseteq \bot$ iff $\KB \cup \{\top \sqsubseteq \exists[U_1](P\sqcap \sigma_{U_2:C}P)\}$ is satisfiable; 
		\item $\KB \nvDash C\sqsubseteq \bot$ iff $\KB\nvDash \sigma_{U_2:C}P \sqsubseteq \bot$.\hfill\qed 
	\end{itemize}
\end{proof}

\DLRp can express complex inclusion and functional dependencies, for which it is well known that reasoning is undecidable~\cite{Mitchell83,ChandraV85}. \DLRp also includes the \DLR extension \DLRID together with unary functional dependencies~\cite{CalvaneseGL01}, which also has been proved to be undecidable.

%%%%%%%%%%%%%%%%%%%%%%%%%%%%%%%%%%%%%%%%%%%%%%%%%%%%%%%%%%%%%%%%%%%%%%
%%%%%%%%%%%%%%%%%%%%%%%%%%%%%%%%%%%%%%%%%%%%%%%%%%%%%%%%%%%%%%%%%%%%%%

\section{The \DLRpm fragment of \DLRp}

Given a \DLRp knowledge base $(\Tmc,\Re)$, we define the \emph{projection signature} as the set $\mathscr{T}$ including the signatures $\tau(R\!N)$ of the relations $R\!N\in\mathcal{R}$, the singletons associated with each attribute name $U\in\mathcal{U}$, and the relation signatures as they appear explicitly in projection constructs in the relation inclusion axioms of the knowledge base, together with their implicit occurrences due to the renaming schema:
  
\begin{enumerate}
	\item $\tau(R\!N)\in\mathscr{T}$ \text{ if } $R\!N\in\mathcal{R}$; 
	\item $\{U\}\in\mathscr{T}$ \text{ if } $U\in\mathcal{U}$; 
	\item $\{U_1,\ldots,U_k\}\in\mathscr{T}$ \text{ if } $\EXISTR{q}{V_1,\ldots,V_k} R\in\Tmc$ \text{ and } $\{U_i,V_i\}\subseteq [U_i]_\Re $ \text{ for}~$1\!\leq\!i\!\leq\!k$. 
\end{enumerate}

\begin{figure*}[t]
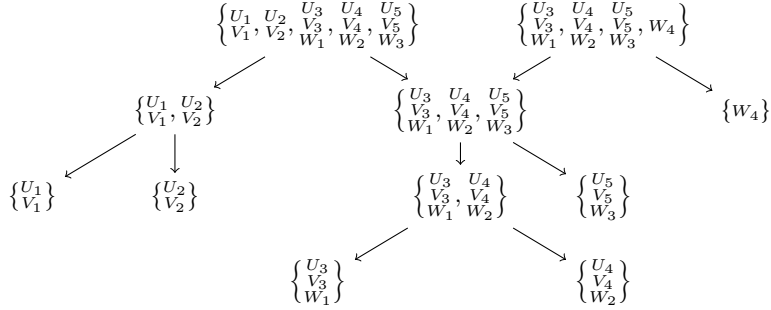

\begin{center}
\tikz[x=6em,y=8ex] {
\node (a) at (2,3)  {$\left\{\substack{U_1\\V_1},\substack{U_2\\V_2},\substack{U_3\\V_3\\W_1},\substack{U_4\\V_4\\W_2},\substack{U_5\\V_5\\W_3}\right\}$};
\node (b) at (4,3)  {$\left\{\substack{U_3\\V_3\\W_1},\substack{U_4\\V_4\\W_2},\substack{U_5\\V_5\\W_3},\substack{\vspace{1ex}\\W_4\\~}\right\}$};
\node (c) at (1,2)  {$\left\{\substack{U_1\\V_1},\substack{U_2\\V_2}\right\}$};
\node (c1) at (4,1)  {$\left\{\substack{U_5\\V_5\\W_3}\right\}$};
\node (d) at (3,2)  {$\left\{\substack{U_3\\V_3\\W_1},\substack{U_4\\V_4\\W_2},\substack{U_5\\V_5\\W_3}\right\}$};
\node (d1) at (5,2)  {$\left\{\substack{~\\W_4}\right\}$};
\node (e) at (0,1)  {$\left\{\substack{U_1\\V_1}\right\}$};
\node (f) at (1,1)  {$\left\{\substack{U_2\\V_2}\right\}$};
\node (g) at (3,1)  {$\left\{\substack{U_3\\V_3\\W_1},\substack{U_4\\V_4\\W_2}\right\}$};
\node (h) at (2,0)  {$\left\{\substack{U_3\\V_3\\W_1}\right\}$};
\node (i) at (4,0)  {$\left\{\substack{U_4\\V_4\\W_2}\right\}$};
\draw 
(a) edge[->] (c)
(d) edge[->] (c1)
(a) edge[->] (d)
(b) edge[->] (d)
(b) edge[->] (d1)
(c) edge[->] (e)
(c) edge[->] (f) 
(d) edge[->] (g)
(g) edge[->] (h) 
(g) edge[->] (i);
}
\end{center}
\caption{\label{fig:multitree} The projection signature graph of the example.}
\end{figure*}

We call \emph{projection signature graph} the directed acyclic graph $(\supset,\mathscr{T})$ with the attribute singletons $\{U\}$ being the sinks. The \DLRpm fragment of \DLRp allows only for knowledge bases with a projection signature graph being a \emph{multitree}, namely the set of nodes reachable from any node of the projection signature graph should form a tree.  
Given a relation name ${R\!N}$, the subgraph of the projection signature graph dominated by ${R\!N}$ is a tree where the leaves are all the attributes in $\tau({R\!N})$ and the root is $\tau({R\!N})$.
\\
We call $\mathscr{T}_{\{U_1,\ldots,U_k\}}$ the tree formed by the nodes in the projection signature graph dominated by the set of attributes $\{U_1,\ldots,U_k\}$.
Given two relation signatures (i.e., two sets of attributes) $\tau_1,\tau_2\subseteq\mathcal{U}$, by $\pth{\tau_1}{\tau_2}$ we denote the path in $(\supset,\mathscr{T})$ between $\tau_1$ and $\tau_2$, if it exists.  Note that $\pth{\tau_1}{\tau_2}=\emptyset$ both when a path does not exist and when $\tau_1\subseteq \tau_2$, and
$\textsc{path}_{\mathscr{T}}$ is functional in \DLRpm due to the multitree restriction on projection signatures. The notation $\chd{\tau_1}{\tau_2}$ means that ${\tau_2}$ is a child of ${\tau_1}$ in $(\supset,\mathscr{T})$.

In addition to the above multitree condition, the \DLRpm fragment of
\DLRp allows for knowledge bases with projection constructs
$\EXISTR{q}{U_1,\ldots,U_k} R$ (resp. $\EXISTR{q}{U} R$) with a
cardinality $q>1$ only if the length of the path
$\pth{\{U_1,\ldots,U_k\}}{\tau(R)}$ (resp. $\pth{\{U\}}{\tau(R)}$) is
1. This allows to map cardinalities in \DLRpm
into cardinalities in \ALCQI.

Figure~\ref{fig:multitree} shows that the projection signature graph
of the knowledge base introduced in Section~\ref{sec:syntax} is indeed
a multitree. Note that in the figure we have collapsed equivalent
attributes in a unique equivalence class, according to the renaming
schema.

\DLRpm restricts \DLRp only in the way multiple projections of relations appear in the knowledge base. It is easy to see that \DLR is included in \DLRpm, since the projection signature graph of any \DLR knowledge base has maximum depth equal to 1. \DLRID~\cite{CalvaneseGL01} together with (unary) functional dependencies is also included in \DLRpm, with the proviso that projections of relations in the knowledge base form a multitree projection signature graph. Since (unary) functional dependencies are expressed via the inclusions of projections of relations (see, e.g., the functional dependency~(\ref{funct-dep}) in the previous example), by constraining the projection signature graph to be a multitree, the possibility to build combinations of functional dependencies as the ones in~\cite{CalvaneseGL01} leading to undecidability is ruled out. Also note that \DLRpm is able to correctly express the UML fragment as introduced in~\cite{BeCD05-AIJ-2005,ACKRZ:er07} and the ORM fragment as introduced in~\cite{DBLP:conf/otm/FranconiM13}.

%%%%%%%%%%%%%%%%%%%%%%%%%%%%%%%%%%%%%%%%%%%%%%%%%%%%%%%%%%%%%%%%%%%%%%
%%%%%%%%%%%%%%%%%%%%%%%%%%%%%%%%%%%%%%%%%%%%%%%%%%%%%%%%%%%%%%%%%%%%%%

\section{Mapping \DLRpm to \ALCQI}
\label{sec:mapping}

We show that reasoning in \DLRpm is \ExpTime-complete by providing a mapping from \DLRpm knowledge bases to \ALCQI knowledge bases; the reverse mapping from \ALCQI knowledge bases to \DLR knowledge bases is well known. The proof is based on the fact that reasoning with \ALCQI knowledge bases is \ExpTime-complete~\cite{BCMNP03}. We adapt and extend the mapping presented for \DLR in~\cite{calvanese:et:al:98b}.

%We quickly remind that \ALCQI \emph{concepts} $C$ and \emph{roles} $S$ are defined as follows:
%%
%\begin{align*}
%C  &\to \top\ \mid\ \bot\  \mid\ A_i ~\mid~  \neg C ~\mid~ C_1 \sqcap C_2
%~\mid~ C_1 \sqcup C_2 ~\mid~ \card{q}{S_i}{C} ~\mid~
%\forall S_i\per C,\\
%S &\to Q_i\ \mid Q_i^-,
%\end{align*}
%%
%where $A_i$ is a concept name and $Q_i$ a role name. The non-boolean constructors are interpreted in an interpretation $\mathcal{I} = (\Delta,\Int \cdot)$ by taking
%%
%\begin{align*}
%  (\card{q}{S_i}{C})^\mathcal{I} &= \{ x\in\Delta \mid~\left|\{y\in
%                  \Delta \mid
%                  (x,y)\in S_i^\mathcal{I} \wedge y\in C^\mathcal{I}\}\right| \lesseqgtr q \},\\
%%
%  (\forall S_i\per C)^\mathcal{I} &= \{ x\in\Delta \mid \forall y\in \Delta\
%                                 ((x,y)\in S_i^\mathcal{I} \to y\in C^\mathcal{I}) \},\\
%%
%  \Int{(Q_i^-)} &= \{(x,y)\in \Delta\times \Delta \mid (y,x)\in \Int
%Q\}.
%\end{align*}

In the following we use the shortcut $(S_1\chain\ldots\chain S_n)^-$ for $S_n^-\chain\ldots\chain S_1^-$, the shortcut $\exists^{\lesseqgtr 1} S_1\chain\ldots\chain S_n\per C$ for $\card{1}{S_1\per\ldots\per\exists^{\lesseqgtr 1} S_n}{C}$ and the shortcut $\forall S_1\chain\ldots\chain S_n\per C$ for $\forall S_1\per\ldots\per\forall S_n\per {C}$. 
Note that these shortcuts for the role chain constructor ``$\chain$'' are not correct in general, but they are correct in the context of the specific \ALCQI knowledge bases used in this paper.

Let $\KB = (\mathcal{T},\Re)$ be a \DLRpm knowledge base. 
We first rewrite the knowledge base as follows: for each equivalence class $[U]_{\Re}$ a single \emph{canonical} representative of the class is chosen, and the \KB is consistently rewritten by substituting each attribute with its canonical representative. After this rewriting, the renaming schema does not play any role in the mapping.

%%%%%%%%%%%%%%%%%%%%%%%%%%%

\begin{figure*}[t]
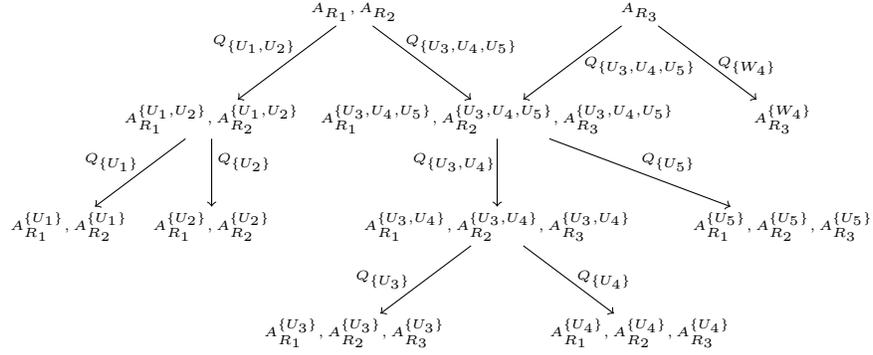

\begin{center}
\tikz[x=6em,y=10ex] {\tiny
\node (a) at (2,3)  {$A_{R_1}, A_{R_2}$};
\node (b) at (4,3)  {$A_{R_3}$};
\node (c) at (1,2)  {$A_{R_1}^{\{U_1,U_2\}}, A_{R_2}^{\{U_1,U_2\}}$};
\node (c1) at (5,1)  {$A_{R_1}^{\{U_5\}}, A_{R_2}^{\{U_5\}}, A_{R_3}^{\{U_5\}}$};
\node (d) at (3,2)  {$A_{R_1}^{\{U_3,U_4,U_5\}}, A_{R_2}^{\{U_3,U_4,U_5\}}, A_{R_3}^{\{U_3,U_4,U_5\}}$};
\node (d1) at (5,2)  {$A_{R_3}^{\{W_4\}}$};
\node (e) at (0,1)  {$A_{R_1}^{\{U_1\}}, A_{R_2}^{\{U_1\}}$};
\node (f) at (1,1)  {$A_{R_1}^{\{U_2\}}, A_{R_2}^{\{U_2\}}$};
\node (g) at (3,1)  {$A_{R_1}^{\{U_3,U_4\}}, A_{R_2}^{\{U_3,U_4\}}, A_{R_3}^{\{U_3,U_4\}}$};
\node (h) at (2,0)  {$A_{R_1}^{\{U_3\}}, A_{R_2}^{\{U_3\}}, A_{R_3}^{\{U_3\}}$};
\node (i) at (4,0)  {$A_{R_1}^{\{U_4\}}, A_{R_2}^{\{U_4\}}, A_{R_3}^{\{U_4\}}$};
\node (ac) at (1.3,2.7) {$Q_{{\{U_1,U_2\}}}$};
\node (ad) at (2.75,2.7) {$Q_{\{U_3,U_4,U_5\}}$};
\node (bd) at (4,2.5) {$Q_{\{U_3,U_4,U_5\}}$};
\node (bd1) at (4.75,2.5) {$Q_{\{W_4\}}$};
\node (ce) at (0.3,1.6) {$Q_{{\{U_1\}}}$};
\node (cf) at (1.23,1.6) {$Q_{{\{U_2\}}}$};
\node (dg) at (2.7,1.6) {$Q_{{\{U_3,U_4\}}}$};
\node (dc1) at (4.2,1.6) {$Q_{{\{U_5\}}}$};
\node (gh) at (2.2,0.5) {$Q_{\{U_3\}}$};
\node (gi) at (3.75,0.5) {$Q_{\{U_4\}}$};
\draw 
(a) edge[->] (c)
(a) edge[->] (d)
(d) edge[->] (c1)
(b) edge[->] (d)
(b) edge[->] (d1) 
(c) edge[->] (e)
(c) edge[->] (f) 
(d) edge[->] (g)
(g) edge[->] (h) 
(g) edge[->] (i);
}
\end{center}
\caption{\label{fig:mapping} The \ALCQI signature generated by the example.}
\end{figure*} 

%%%%%%%%%%%%%%%%%%%%%%%%%%%

The mapping function $\cdot^\dag$ maps each concept name $C\!N$ in the \DLRpm knowledge base to an \ALCQI concept name $C\!N$, each relation name $R\!N$ in the \DLRpm knowledge base to an \ALCQI concept name $A_{R\!N}$ (its global reification), and each attribute name $U$ in the \DLRpm knowledge base to an \ALCQI role name, as detailed below.
\\
For each relation name $R\!N$ the mapping introduces a concept name $A_{R\!N}^{l}$ and a role name $Q_{R\!N}$ (to capture the local reification), and a concept name $A_{R\!N}^{\tau_i}$ for each projected signature $\tau_i$ in the projection signature graph dominated by $\tau(R\!N)$, $\tau_i\in\mathscr{T}_{\tau(R\!N)}$ (to capture global reifications of the projections of $R\!N$). Note that $A_{R\!N}^{\tau(R\!N)}$ coincides with $A_{R\!N}$. 
Furthermore, the mapping introduces a role name $Q_{\tau_i}$ for each projected signature $\tau_i$ in the projection signature, $\tau_i\in\mathscr{T}$, such that there exists $\tau_j\in\mathscr{T}$ with $\chd{\tau_j}{\tau_i}$, i.e., we exclude the case where $\tau_i$ is one of the roots of the multitree induced by the projection signature.\\
The mapping $\cdot^\dag$ applies also to a path. Let
$\tau,\tau'\in\mathscr{T}$ be two generic sets of attributes such that
the function $\pth{\tau}{\tau'} = \tau,\tau_1,\ldots,\tau_n, \tau'$,
then, a path is mapped as follows:
\begin{align*}
	\pth{\tau}{\tau'}^\dag = Q_{\tau_1}\chain\ldots\chain Q_{\tau_n}\chain Q_{\tau'}. 
\end{align*}

% where $\alpha$ has to take into account the different partitions on
% the relation $R$ looking for that partition $P$ which is a direct
% ancestor of the attribute $U_i$.  The mapping $\alpha$ is defined 
% inductively with the following two base cases:
% %
% \begin{align*}
% \alpha(A_{R\!N},U_i) = &
% \Biggl\{\begin{array}[h]{ll}
% A_{R\!N}^{P_1}, & \mbox{if } \textit{child}_{R\!N}(P_1,U'_i),
%                  U'_i\in[U_i]_{\Re}\\
% \bot, & \mbox{otherwise}
% \end{array}
% \\
% \alpha(\card{1}{Q^-_{P}}{R^\dag},U_i) = &
% \Biggl\{\begin{array}[h]{ll}
% \exists Q^-_{P_1}\ldots\exists Q^-_{P_n}\per\card{1}{Q^-_{P}}{R^\dag}, & 
% \parbox[t]{\textwidth}{$\mbox{if, for some } {R\!N}\in\Rmc,
%           \\\textit{child}_{R\!N}(P,P_n),\ldots,\textit{child}_{R\!N}(P_1,U'_i),\\
%           U'_i\in[U_i]_{\Re}$}\\
% \bot, & \mbox{otherwise}
% \end{array}
% \end{align*}
% %
% The inductive cases are as follows:
% %
% \begin{align*}
% \alpha(R_1^\dag \sqcap R_2^\dag,U_i) = & \alpha(R_1^\dag,U_i) \sqcap  \alpha(R_2^\dag,U_i)
% \\
% \alpha(R_1^\dag\sqcap \neg R_2^\dag,U_i) =& \alpha(R_1^\dag,U_i) \sqcap  \neg\alpha(R_2^\dag,U_i)
% \\
% \alpha((\selects{U_j}{C}{R\!N})^\dag,U_i) =& A_{R\!N}^{P_1}\sqcap
%         \exists Q^-_{P_1}\ldots\exists Q^-_{P_n}\per (\selects{U_j}{C}{R\!N})^\dag,
%          \mbox{if }\\
% &
%   \textit{child}_{R\!N}(\tau({R\!N}),P_n),\ldots,\textit{child}_{R\!N}(P_1,U'_i),
%   U'_i\in[U_i]_{\Re}
% \\
% \alpha((\selects{U_j}{C}{R\!N})^\dag,U_i) =& \bot, \mbox{otherwise.}
% \end{align*}

Intuitively, the mapping reifies each node in the projection signature graph: the target \ALCQI signature of the example of the previous section is partially presented in Fig.~\ref{fig:mapping}, together with the projection signature graph. Each node is labelled with the corresponding (global) reification concept ($A_{R_i}^{\tau_j}$), for each relation name $R_i$ and each projected signature $\tau_j$ in the projection signature graph dominated by $\tau(R_i)$, while the edges are labelled by the roles ($Q_{\tau_i}$) needed for the reification.

\begin{figure*}
	[t] 
	\begin{center}
		\renewcommand{\arraystretch}{1.2} $ 
		\begin{array}{r@{\hspace{1ex}}c@{\hspace{1ex}}l} 
			(\neg C)^\dag &=& \neg C^\dag \\
			(C_1 \sqcap C_2)^\dag & =& C_1^\dag \sqcap C_2^\dag \\
			(C_1 \sqcup C_2)^\dag & =& C_1^\dag \sqcup C_2^\dag \\
			(\EXISTR{q}{U_i} R)^\dag & =& \card{q}{\left(\pth{\tau(R)}{\{U_i\}}^\dag\right)^-}{R^\dag}\\
			(\greif{R})^\dag &=& R^\dag \\
			(\lreif{R\!N})^\dag &=& A^l_{R\!N}
			\vspace{2ex}\\
			(R_1\setminus R_2)^\dag &=& R_1^\dag\sqcap \neg R_2^\dag\\
			(R_1 \sqcap R_2)^\dag & =& R_1^\dag \sqcap R_2^\dag\\
			(R_1 \sqcup R_2)^\dag & =& R_1^\dag \sqcup R_2^\dag\\
			(\selects{U_i}{C}{R})^\dag & =& R^\dag \sqcap \forall \pth{\tau(R)}{\{U_i\}}^\dag \per C^\dag\\
			(\EXISTR{q}{U_1,\ldots,U_k} R)^\dag & =&
			%  \begin{cases}
			%    {R^\dag} & \parbox[t]{\textwidth}{$\text{if } q=1 \text{ and }\\ \{U_1,\ldots,U_k\}\!=\!\tau(R)$}\\
			\card{q}{\left(\pth{\tau(R)}{\{U_1,\ldots,U_k\}}^\dag\right)^-}{R^\dag} 
			%    & \text{otherwise}
			% \end{cases}                                   
		\end{array}
		$ 
		\renewcommand{\arraystretch}{1} 
	\end{center}
	\caption{\label{fig:themapping} The mapping for concept and relation expressions.} 
\end{figure*}

The mapping $\cdot^\dag$ is extended to concept and relation
expressions as in Figure~\ref{fig:themapping}, with the proviso that
whenever $\pth{\tau_1}{\tau_2}$ returns an empty path then the
translation for the corresponding expression becomes the bottom
concept. Note that in \DLRpm the cardinalities on a path are
restricted to the case $q=1$ whenever a path is of length greater than
$1$, so we still remain within the \ALCQI description logic when the
mapping applies to cardinalities. So, if we need to express a cardinality constraint
$\EXISTR{q}{U_i} R$,] with $q>1$, then $U_i$ should not be mentioned in
any other projection of the relation $R$ in such a way that $|\pth{\tau(R)}{\{U_i\}}|=1$.

In order to explain the need for the path function in the mapping, notice that a relation is reified according to the decomposition dictated by projection signature graph it dominates. Thus, to access an attribute $U_j$ of a relation ${R_i}$ it is necessary to follow the path through the projections that use that attribute. This path is a role chain from the signature of the relation (the root) to the attribute as returned by the $\pth{\tau(R_i)}{U_i}$ function. For example, considering Fig.~\ref{fig:mapping}, in order to access the attribute $U_4$ of the relation $R_3$ in the expression $(\selects{U_4}{C}{R_3})$, the path $\pth{\tau(R_3)}{\{U_4\}}^\dag$ is equal to the role chain $Q_{\{U_3,U_4,U_5\}}\chain Q_{\{U_3,U_4\}}\chain Q_{\{U_4\}}$, so that
$
(\selects{U_4}{C}{R_3})^\dag ~=~ A_{R_3} \sqcap \forall Q_{\{U_3,U_4,U_5\}}\chain
Q_{\{U_3,U_4\}}\chain Q_{\{U_4\}}\per C.
$
\\
Similar considerations can be done when mapping cardinalities over relation projections.

The mapping $\gamma(\KB)$ of a \DLRpm knowledge base \KB with a signature $(\mathcal{C},\mathcal{R},\mathcal{U},\tau)$ is defined as the following \ALCQI TBox:
%
%\begin{multline*}
%  \gamma(\KB) \quad = \quad \gamma_\textit{dsj} \quad\cup\quad%
%  \bigcup_{R\!N\in\Rmc}\hspace*{-.5em}\gamma_{\textit{rel}}(R\!N) \quad\cup\quad %
%  \bigcup_{\lreif{R\!N}\in\KB}\hspace*{-.7em}\gamma_{\textit{lobj}}({R\!N})
%  \quad\cup\\
%%  \bigcup_{{R\!N}\in\Rmc, P_i\in \mathscr{T}_{R\!N}}\hspace*{-1.7em}\gamma_{\textit{part}}({P_i})\quad\cup\quad %
% % \bigcup_{U_1\looparrowright U_2\in\KB}\hspace*{-1em}{U_1 \equiv U_2}\quad\cup\quad %
%  \bigcup_{C_1\sqsubseteq C_2\in\KB}\hspace*{-1em}{C_1^\dag\sqsubseteq C_2^\dag}
%  \quad\cup%
% \bigcup_{R_1\sqsubseteq R_2\in\KB}\hspace*{-1em}{R_1^\dag\sqsubseteq R_2^\dag},%
%\end{multline*}
%
\begin{align*}
  \gamma(\KB) \quad =& \quad \gamma_\textit{dsj} ~\cup %
  \bigcup_{R\!N\in\Rmc}\gamma_{\textit{rel}}(R\!N) ~\cup %
  \bigcup_{R\!N\in\Rmc}\gamma_{\textit{lobj}}({R\!N})
  ~\cup \\
%  \bigcup_{{R\!N}\in\Rmc, P_i\in \mathscr{T}_{R\!N}}\hspace*{-1.7em}\gamma_{\textit{part}}({P_i})\quad\cup\quad %
% \bigcup_{U_1\looparrowright U_2\in\KB}\hspace*{-1em}{U_1 \equiv U_2}\quad\cup\quad %
  & \quad\bigcup_{C_1\sqsubseteq C_2\in\KB}{C_1^\dag\sqsubseteq C_2^\dag}
  ~\cup%
 \bigcup_{R_1\sqsubseteq R_2\in\KB}{R_1^\dag\sqsubseteq R_2^\dag}%
\end{align*}
where
\begin{align*}
\gamma_\textit{dsj} = % &~\bigl\{A\sqsubseteq\neg A_{R\!N}
  % \mid A\in\Cmc ~\mbox{and}~ {R\!N}\in\Rmc\bigr\} ~\cup \\
&~\bigl\{A_{R\!N_1}^{\tau_i}\sqsubseteq\neg A_{R\!N_2}^{\tau_j} \mid R\!N_1, R\!N_2\in\Rmc,
  \tau_i, \tau_j\in \mathscr{T}, |\tau_i|\geq 2, |\tau_j|\geq 2, \tau_i\neq \tau_j
  \bigr\}\\
% &\bigl\{{R\!N_1^\dag}\sqsubseteq\neg {R\!N_2^\dag} \mid R\!N_1, R\!N_2\in\Rmc,
%   \exists P_i\in \mathscr{T}_{R\!N_1}\per \forall P_j\in \mathscr{T}_{R\!N_2}\per [P_i]_{\Re} \neq [P_j]_{\Re}
%   \bigr\};\\
%note that $A_2^\dag$ includes also all the concepts names of the form $A_{R\!N}^{P_i}$;
%
  \gamma_{\textit{rel}}(R\!N)
  =&~\bigcup_{\tau_i\in\mathscr{T}_{\tau(R\!N)}}~
     \bigcup_{\chd{\tau_i}{\tau_j}}\bigl\{
     A^{\tau_i}_{R\!N} \sqsubseteq \exists Q_{\tau_j}\per A^{\tau_j}_{R\!N},~\exists^{\geq 2}
     Q_{\tau_j}\per\top\sqsubseteq \bot
     \bigr\} \\
% ~\cup\\
% &\bigcup_{\textit{child}(\tau_i,U_j)}\hspace*{-1.5em}\bigl\{
%    A^{P_i}_{R\!N} \sqsubseteq \exists {U_j}\per\top,~\exists^{\geq 2}
%    {U_j}\per\top\sqsubseteq \bot\bigr\}\Bigr);\\
%
\gamma_{\textit{lobj}}({R\!N}) =&~\{\parbox[t]{\textwidth}{$
A_{R\!N}\sqsubseteq\exists Q_{R\!N}\per A_{R\!N}^l,~
\exists^{\geq 2}Q_{R\!N}\per \top\sqsubseteq \bot,\\
A_{R\!N}^l \sqsubseteq \exists Q_{R\!N}^-\per A_{R\!N},~
\exists^{\geq 2} Q_{R\!N}^-\per \top\sqsubseteq \bot\}.$}
%\\
%
 % \gamma_{\textit{part}}(P_i) =&\bigl\{Q_{P_i} \equiv Q_{P_j}
 %  \mid [P_i]_{\Re} = [P_j]_{\Re}, ~\mbox{for some partition}~ P_j \neq P_i
 %  \bigr\}.
%
\end{align*}

Intuitively, $\gamma_\textit{dsj}$ ensures that relations with different signatures are disjoint, thus, e.g., enforcing the union compatibility. The axioms in $\gamma_{\textit{rel}}$ introduce classical reification axioms for each relation and its relevant projections. The axioms in $\gamma_{\textit{lobj}}$ make sure that each local objectification differs form the global one.

Clearly, the size of $\gamma(\KB)$ is polynomial in the size of $\KB$ (under the same coding of the numerical parameters), and thus we are able to state the main result of this paper.

\begin{theorem}
	\label{th:sat} A \DLRpm knowledge base $\KB$ is satisfiable iff the \ALCQI knowledge base $\gamma(\KB)$ is satisfiable. 
\end{theorem}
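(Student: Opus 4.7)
The plan is to prove both directions of the equivalence by model construction, exploiting the multitree structure of the projection signature graph to set up a bijective correspondence between labelled tuples in a \DLRpm model and reified individuals in an \ALCQI model.

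For the \emph{only if} direction, assume $\mathcal{I}$ is a model of $\KB$. I build an \ALCQI interpretation $\mathcal{J}$ whose domain extends $\dom^\mathcal{I}$ with fresh elements $\imath(t)$ and $\ell_{R\!N}(t)$ for every tuple $t$ arising as (a projection onto some node of $\mathscr{T}$ of) a tuple of some $R\!N^\mathcal{I}$. Concept names are interpreted as in $\mathcal{I}$. The concept $A_{R\!N}^{\tau_i}$ is populated with the global reifications of the projections of tuples of $R\!N^\mathcal{I}$ onto $\tau_i$, for each $\tau_i\in\mathscr{T}_{\tau(R\!N)}$. The role $Q_{\tau_j}$ maps the reification of a tuple at level $\tau_i$ to the reification of its further projection at level $\tau_j$ whenever $\chd{\tau_i}{\tau_j}$, while $Q_{\{U\}}$ sends $\imath(\langle U\colon d\rangle)$ to $d$ itself. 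Finally, $Q_{R\!N}$ links $\imath(t)$ to $\ell_{R\!N}(t)$ for every $t\in R\!N^\mathcal{I}$. I then check that $\mathcal{J}$ satisfies $\gamma_\textit{dsj}$ (by injectivity of $\imath$ and disjointness of distinct signatures), $\gamma_\textit{rel}(R\!N)$ (by the functionality of tuple projection), $\gamma_\textit{lobj}(R\!N)$ (by bijectivity of $\ell_{R\!N}$ onto its image), and each translated inclusion axiom by structural induction on the mapping $\cdot^\dag$.

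For the \emph{if} direction, assume $\mathcal{J}$ is a model of $\gamma(\KB)$. I construct a \DLRpm interpretation $\mathcal{I}$ by letting $\dom^\mathcal{I}=\dom^\mathcal{J}$, $C\!N^\mathcal{I}=C\!N^\mathcal{J}$, and reconstructing each $R\!N^\mathcal{I}$ as follows: for every $e\in A_{R\!N}^\mathcal{J}$ and every $U\in\tau(R\!N)$, the functionality axioms of $\gamma_\textit{rel}$ together with the multitree property guarantee that the chain of $Q$-roles encoded by $\pth{\tau(R\!N)}{\{U\}}^\dag$ starting at $e$ reaches a unique element $d^e_U\in\dom^\mathcal{J}$; I then add the tuple $t_e=\langle U\colon d^e_U\rangle_{U\in\tau(R\!N)}$ to $R\!N^\mathcal{I}$ and set $\imath(t_e)=e$. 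The disjointness axioms $\gamma_\textit{dsj}$ together with the functionality of the $Q$-roles guarantee that $\imath$ is a well-defined global injection across distinct relations and distinct projections. The local objectification $\ell_{R\!N}$ is read off the bijection between $A_{R\!N}^\mathcal{J}$ and $(A_{R\!N}^l)^\mathcal{J}$ induced by the functional role $Q_{R\!N}$ and its inverse. A structural induction on \DLRpm expressions, mirroring the one in the other direction, then shows that every TBox axiom of $\KB$ is satisfied by $\mathcal{I}$.

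The main obstacle will be the faithfulness of cardinality constraints: \ALCQI number restrictions apply to single roles, while the mapping uses the chain shortcut $\exists^{\lesseqgtr q}\, S_1\chain\ldots\chain S_n$. This shortcut captures the intended semantics only when each intermediate role $S_i$ is functional, so that counting witnesses at the end of the chain coincides with counting witnesses at its start. This is precisely what the axioms $\exists^{\geq 2}Q_{\tau_j}\per\top\sqsubseteq\bot$ in $\gamma_\textit{rel}$ enforce, while the multitree restriction ensures that the path between any two signatures in $\mathscr{T}$ is unique, so the role chain is unambiguously defined. Moreover, the \DLRpm syntactic restriction that $q>1$ is admitted only when $|\pth{\{U_1,\ldots,U_k\}}{\tau(R)}|=1$ keeps the cardinality in the image inside pure \ALCQI, where it applies to a single role $Q_{\{U_1,\ldots,U_k\}}$. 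Carefully tracking this chain/functionality interplay during the inductive verification of the mapping, and matching the global and local objectifications consistently, is where the bulk of the verification lies.
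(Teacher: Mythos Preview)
Your overall strategy matches the paper's: build $\mathcal{J}$ from $\mathcal{I}$ by reifying tuples along the projection signature graph, and conversely read off tuples from the chains of $Q$-roles. There is, however, a genuine gap in your \emph{if} direction.

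You write that ``the disjointness axioms $\gamma_\textit{dsj}$ together with the functionality of the $Q$-roles guarantee that $\imath$ is a well-defined global injection''. They do not. The axioms $\exists^{\geq 2}Q_{\tau_j}.\top\sqsubseteq\bot$ make each $Q_{\tau_j}$ \emph{forward} functional, so from every $e\in A_{R\!N}^{\mathcal J}$ the chain $\pth{\tau(R\!N)}{\{U\}}^\dag$ reaches a unique leaf. But nothing in $\gamma(\KB)$ makes $Q_{\tau_j}$ inverse-functional, so two distinct elements $e_1\neq e_2$ in $A_{R\!N}^{\mathcal J}$ may share all their $Q$-successors and hence generate the \emph{same} tuple $t$. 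Then your definition $\imath(t_e)=e$ is not well-defined; and if you resolve the clash by an arbitrary choice, the structural induction for cardinality constructs fails: $\mathcal J$ may have $q$ distinct $e$'s witnessing $\exists^{\geq q}(\pth{\tau(R)}{\{U_i\}}^\dag)^-.R^\dag$ while $\mathcal I$ has only one tuple, so $(\exists^{\geq q}[U_i]R)^{\mathcal I}$ and its translation need not coincide. The disjointness axioms $\gamma_\textit{dsj}$ only separate reifications living at \emph{different} nodes of the projection signature graph; they say nothing about two reified elements at the \emph{same} node.

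The paper closes this gap by first passing, without loss of generality, to a \emph{tree model} $\mathcal J$ of $\gamma(\KB)$ (using the tree-model property of \ALCQI). In a tree model every node has fresh children, so distinct $e_1,e_2\in A_{R\!N}^{\mathcal J}$ necessarily produce distinct tuples, and the map $e\mapsto t_e$ is a bijection onto $R\!N^{\mathcal I}$. You need to insert this step (or an equivalent argument establishing inverse-functionality of the reification) before defining $\imath$.

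A minor remark on the \emph{only if} direction: you propose to \emph{extend} $\Delta^{\mathcal I}$ with fresh elements $\imath(t)$ and $\ell_{R\!N}(t)$. In the paper's semantics the objectification functions $\imath$ and $\ell_{R\!N}$ already take values in $\Delta$, so no extension is necessary and one may simply set $\Delta^{\mathcal J}=\Delta^{\mathcal I}$. Your variant is harmless, but it suggests a slight misreading of the interpretation structure.
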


\begin{proof}
  We assume that the \KB is consistently rewritten by substituting
  each attribute with its canonical representative, thus, we do not
  have to deal with the renaming of attributes. Furthermore, we extend
  the function $\imath$ to
  singleton tuples with the meaning that $\imath(\langle U_i:d_i\rangle)=d_i$.\\
  ($\Rightarrow$) Let
  $\Imc = (\Delta^\Imc, \cdot^\Imc, \rho, \imath,
  \ell_{R\!N_1},\ldots)$
  be a model for a \DLRpm knowledge base $\KB$. To construct a model
  $\Jmc=(\Delta^\Jmc,\cdot^\Jmc)$ for the \ALCQI knowledge base
  $\gamma(\KB)$ we set $\Delta^\Jmc = \Delta^\Imc$.  Furthermore, we
  set: $(C\!N^\dag)^\Jmc=(C\!N)^\Imc$, for every atomic concept
  $C\!N\in\Cmc$, while for every ${R\!N}\in\Rmc$ and
  ${\tau_i\in\mathscr{T}_{\tau(R\!N)}}$ we set
  % $(A_{R\!N}^{\tau_i})^\Jmc = \Gamma_{R\!N}^{\tau_i}$, 
  % where
  % $\Gamma_{R\!N} = \bigcup_{\tau_i\in\mathscr{T}_{\tau(R\!N)}}
  % \Gamma_{R\!N}^{\tau_i}$ and
  \begin{multline}\label{eq:R}
    (A_{R\!N}^{\tau_i})^\Jmc = \{\imath(\langle U_1:d_1,\ldots,U_k:d_k\rangle) \mid 
    \{U_1,\ldots,U_k\}=\tau_i \text{ and } \\ \exists t\in R\!N^\Imc\per t[U_1]=d_1,\ldots,t[U_k]=d_k\}.
  \end{multline}
  For each role name $Q_{\tau_i}$, $\tau_i\in\mathscr{T}$, we set
  \begin{multline}\label{eq:Q}
    (Q_{\tau_i})^\Jmc = \{(d_1,d_2)\in \Delta^\Jmc\times
    \Delta^\Jmc\mid \exists t\in {R\!N}^\Imc \text{ s.t. } d_1=\imath(t[{\tau_j}]), d_2=\imath(t[{\tau_i}])\\
    \text{ and } \chd{\tau_j}{\tau_i}, \text{ for some } {R\!N}\in\R\}.
  \end{multline}
  For every ${R\!N}\in\Rmc$ we set
  \begin{multline}\label{eq:loc}
  Q_{R\!N}^\Jmc = \{(d_1,d_2)\in \Delta^\Jmc\times \Delta^\Jmc\mid
    \exists t\in\Int{R\!N} \text{ s.t. } d_1=\imath(t) \text{ and } d_2=\ell_{R\!N}(t)\},
  \end{multline}
and
  \begin{align}\label{eq:Aloc}
  (A^l_{R\!N})^\Jmc = \{\ell_{R\!N}(t)\mid t\in \Int{R\!N}\}.
  \end{align}
We now show that $\Jmc$ is indeed a model of $\gamma(\KB)$.
\begin{enumerate}
\item $\Jmc\models \gamma_\textit{dsj}$. This is a direct consequence
  of the fact that $\imath$ is an injective function and that tuples
  with different aryties are different tuples.
\item $\Jmc\models \gamma_{\textit{rel}}(R\!N)$, for every
  ${R\!N\in\Rmc}$. We show that, for each $\tau_i,\tau_j$
  s.t. $\chd{\tau_i}{\tau_j}$ and $\tau_i\in\mathscr{T}_{\tau(R\!N)}$,
  $\Jmc\models A^{\tau_i}_{R\!N} \sqsubseteq \exists Q_{\tau_j}\per
  A^{\tau_j}_{R\!N}$
  % $\Jmc\models A^{\tau_i}_{R\!N} \sqsubseteq \forall Q_{\tau_j}\per A^{\tau_j}_{R\!N}$ 
  and
  $\Jmc\models~\exists^{\geq 2} Q_{\tau_j}\per\top\sqsubseteq \bot$:
  \begin{itemize}
  \item
    $\Jmc\models A^{\tau_i}_{R\!N} \sqsubseteq \exists Q_{\tau_j}\per
    A^{\tau_j}_{R\!N}$.
    Let $d\in(A^{\tau_i}_{R\!N})^\Jmc$, by~(\ref{eq:R}),
    $\exists t\in {R\!N}^\Imc$ s.t. $d=\imath(t[{\tau_i}])$. Since
    $\chd{\tau_i}{\tau_j}$, then $\exists d'=\imath(t[{\tau_j}])$ and,
    by~(\ref{eq:Q}), $(d,d')\in Q_{\tau_j}^\Jmc$, while
    by~(\ref{eq:R}), $d'\in (A^{\tau_j}_{R\!N})^\Jmc$. Thus,
    $d\in (\exists Q_{\tau_j}\per A^{\tau_j}_{R\!N})^\Jmc$.
  \item
    $\Jmc\models~\exists^{\geq 2} Q_{\tau_j}\per\top\sqsubseteq \bot$.
    The fact that each $Q_{\tau_j}$ is interpreted as a funcional role is
    a direct consequence of the construction~(\ref{eq:Q}) and the fact
    that $\imath$ is an injective function.
  \end{itemize}
\item $\Jmc\models \gamma_{\textit{lobj}}(R\!N)$, for every
  ${R\!N\in\Rmc}$. Similar as above, considering the fact that each $\ell_{R\!N}$ is
  an injective function and equations~(\ref{eq:loc})-(\ref{eq:Aloc}).
\item $\Jmc\models {C_1^\dag\sqsubseteq C_2^\dag}$ and
  $\Jmc\models {R_1^\dag\sqsubseteq R_2^\dag}$. Since
  $\Imc\models {C_1\sqsubseteq C_2}$ and
  $\Imc\models R_1\sqsubseteq R_2$, It is enough to show the
  following:
  \begin{itemize}
  \item $d\in \Int{C} \text{ iff } d\in (C^\dag)^\Jmc$, for all \DLRpm\ concepts;
  \item $t\in \Int{R} \text{ iff } \imath(t)\in (R^\dag)^\Jmc$, for all \DLRpm\ relations.
  \end{itemize}
  Before we proceed with the proof, it is easy to show by structural
  induction that the following property holds:
  \begin{align}\label{prop:RN1}
    \text{If } \imath(t)\in R^{\dag\Jmc} \text{ then } \exists
    \imath(t')\in R\!N^{\dag\Jmc} 
    \text{ s.t. } t=t'[\tau(R)], \text{ for some } R\!N\in\R.
  \end{align}
  We now proceed with the proof by structural induction. The base
  cases, for atomic concepts and roles, are immediate form the
  definition of both ${C\!N}^\Jmc$ and ${R\!N}^\Jmc$.
  The cases where complex concepts and relations are constructed using
  either boolean operators or global reification are easy to show. We
  thus show only the following cases.\\
  Let $d\in(\lreif R\!N)^\Imc$. Then, $d=\ell_{R\!N}(t)$ with
  $t\in R\!N^\Imc$. By induction, $\imath(t)\in A_{R\!N}^\Jmc$ and, by
  $\gamma_{\textit{lobj}}({R\!N})$, there is a $d'\in\Delta^\Jmc$
  s.t. $(\imath(t),d')\in Q_{R\!N}^\Jmc$ and
  $d'\in (A_{R\!N}^l)^\Jmc$. By~(\ref{eq:loc}), $d'=\ell_{R\!N}(t)$
  and, since $\ell_{R\!N}$ is injective, $d'=d$. Thus,  $d\in(\lreif
  R\!N)^{\dag\Jmc}$.\\
  Let $d\in(\exists^{\geq q}[U_i] R)^\Imc$. Then, there are different
  $t_1,\ldots,t_q\in R^\Imc$ s.t. $t_l[U_i]=d$, for all
  $l=1,\ldots,q$. By induction, $\imath(t_l)\in R^{\dag \Jmc}$ while,
  by~(\ref{prop:RN1}), $\imath(t'_l)\in R\!N^{\dag\Jmc}$, for
  some atomic relation ${R\!N}\in\R$ and a tuple $t'_l$
  s.t. $t_l=t'_l[\tau(R)]$. By~$\gamma_{\textit{rel}}({R\!N})$
  and~(\ref{eq:Q}), $(\imath(t'_l),\imath(t_l))\in
  (\pth{\tau({R\!N})}{\tau(R)}^\dag)^\Jmc$ and  $(\imath(t_l),d)\in
  (\pth{\tau({R})}{\{U_i\}}^\dag)^\Jmc$. Since $\imath$ is injective,
  $\imath(t_l)\neq \imath(t_j)$ when $l\neq j$,
  thus, $d\in(\exists^{\geq q}[U_i] R)^{\dag\Jmc}$.\\
  Let $t\in(\selects{U_i}{C}{R})^\Imc$. Then, $t\in R^\Imc$ and
  $t[U_i]\in C^\Imc$ and, by induction, $\imath(t)\in R^{\dag \Jmc}$
  and $t[U_i]\in C^{\dag \Jmc}$. As before, by
  $\gamma_{\textit{rel}}(R\!N)$ and by~(\ref{eq:Q}) and (\ref{prop:RN1}),
  $(\imath(t),t[U_i])\in(\pth{\tau(R)}{\{U_i\}}^\dag)^\Jmc$.
   Since $\pth{\tau(R)}{U_i}^\dag$ is functional, then we have that
  $\imath(t)\in (\selects{U_i}{C}{R})^{\dag \Jmc}$.\\
  Let $t\in(\exists[U_1,\ldots,U_k] R)^{\Imc}$. Then, there is a tuple
  $t'\in R^\Imc$ s.t. $t'[U_1,\ldots,U_k]=t$ and, by induction,
  $\imath(t')\in R^{\dag\Jmc}$. As before, by
  $\gamma_{\textit{rel}}(R\!N)$ and by~(\ref{eq:Q}) and
  (\ref{prop:RN1}), we can show that
  $(\imath(t'),\imath(t))\in
  \pth{\tau(R)}{\{U_1,\ldots,U_k\}}^{\dag\Jmc}$ and thus
  $\imath(t)\in(\exists[U_1,\ldots,U_k] R)^{\dag\Jmc}$.\\
  All the other cases can be proved in a similar way. We now show the
  vice versa.

  \smallskip

  Let $d\in(\lreif R\!N)^{\dag\Jmc}$. Then, $d\in (A_{R\!N}^l)^\Jmc$
  and $d=l_{R\!N}(t)$, for some $t\in
  {R\!N}^\Imc$, i.e., $d\in(\lreif R\!N)^\Imc$.\\
  Let $d\in(\exists^{\geq q}[{U_i}] R)^{\dag \Jmc}$. Then, there are
  different  $d_1,\ldots,d_q\in\Delta^\Jmc$ s.t.
  $(d_l,d)\in (\pth{\tau(R)}{\{U_i\}}^\dag)^{\Jmc}$ and
  $d_l\in R^{\dag \Jmc}$, for $l=1,\ldots,q$. By induction, each
  $d_l=\imath(t_l)$ and $t_l\in R^\Imc$. Since $\imath$ is injective,
  then $t_l\neq t_j$ for all $l,j=1,\ldots,q$, $l\neq j$. We need to
  show that $t_l[U_i] = d$, for all $l=1,\ldots,q$. By~(\ref{eq:Q})
  and the fact that $(d_l,d)\in (\pth{\tau(R)}{\{U_i\}}^\dag)^{\Jmc}$,
  then $d=\imath(t_l[U_i])=t_l[U_i]$.\\
  Let $\imath(t)\in(\selects{U_i}{C}{R})^{\dag\Jmc}$.  Then,
  $\imath(t)\in R^{\dag \Jmc}$ and, by induction, $t\in R^\Imc$. Let
  $t[U_i]=d$. We need to show that $d\in C^\Imc$. By
  $\gamma_{\textit{rel}}(R\!N)$ and by~(\ref{eq:Q}) and
  (\ref{prop:RN1}),
  $(\imath(t),d)\in (\pth{\tau(R)}{\{U_i\}}^\dag)^\Jmc$, then
  $d\in C ^{\dag\Jmc}$ and, by induction, $d\in C ^{\Imc}$.\\
%  Thus, $t\in(\selects{U_i}{C}{R})^\Imc$.\\
%
  Let $\imath(t)\in(\exists[U_1,\ldots,U_k] R)^{\dag\Jmc}$. Then, there is
  $d\in\Delta^\Jmc$ s.t.
  $$(d,\imath(t)) \in (\pth{\tau(R)}{\{U_1,\ldots,U_k\}}^\dag)^\Jmc$$ and
  $d\in R^\Jmc$.  By induction, $d=\imath(t')$ and
  $t'\in R^\Imc$. By~(\ref{eq:Q}), $\imath(t)=
  \imath(t'[U_1,\ldots,U_k])$, i.e., $t=t'[U_1,\ldots,U_k]$. Thus,
  $t\in(\exists[U_1,\ldots,U_k] R)^{\Imc}$.

\smallskip

($\Leftarrow$) Let $\Jmc=(\Delta^\Jmc,\cdot^\Jmc)$ be a model for the
knowledge base $\gamma(\KB)$.  Without loss of generality, we can
assume that $\Jmc$ is a \textit{tree model}. We then construct a model
$\Imc = (\Delta^\Imc, \cdot^\Imc, \rho, \imath, \ell_{R\!N_1},\ldots)$
for a \DLRpm knowledge base $\KB$. We set:
$\Delta^\Imc = \Delta^\Jmc$, $C\!N^\Imc = (C\!N^\dag)^\Jmc$, for every
atomic concept $C\!N\in\Cmc$, while, for every ${R\!N}\in\Rmc$, we set:
\begin{multline}\label{eq:RN}
  R\!N^\Imc = \{t=\langle U_1:d_1,\ldots,U_n:d_n\rangle \in T_{\Delta^\Imc}(\tau(R\!N))\mid
  \exists d\in A_{R\!N}^\Jmc \text{ s.t. }\\ (d,t[U_i])\in (\pth{\tau(R\!N)}{\{U_i\}}^\dag)^\Jmc \text{ for } i=1,\ldots,n\}.
\end{multline}
Since $\Jmc$ satisfies $\gamma_{\textit{rel}}(R\!N)$, then, for every
$d\in A_{R\!N}^\Jmc$ there is a unique tuple
$\langle U_1:d_1,\ldots,U_n:d_n\rangle \in R\!N^\Imc$, we say that $d$
\emph{generates} $\langle U_1:d_1,\ldots,U_n:d_n\rangle$ and, in
symbols, $d\to \langle U_1:d_1,\ldots,U_n:d_n\rangle$. Furthermore,
since \Jmc is tree shaped, to each tuple corresponds a unique $d$ that
generates it. Thus, let $d\to \langle U_1:d_1,\ldots,U_n:d_n\rangle$,
by setting $\imath(\langle U_1:d_1,\ldots,U_n:d_n\rangle) = d$ and
\begin{multline}\label{eq:iota}
  \imath(\langle U_1:d_1,\ldots,U_n:d_n\rangle[\tau_i])=d_{\tau_i}, \text{ s.t. }\\
  (d,d_{\tau_i})\in(\pth{\{U_1,\ldots,U_n\}}{\tau_i}^\dag)^\Jmc,
\end{multline}
for all ${\tau_i\in\mathscr{T}_{\{U_1,\ldots,U_n\}}}$, then, the function $\imath$ is as required.\\
By setting
\begin{multline}\label{eq:lobj}
  \ell_{R\!N}(\langle U_1:d_1,\ldots,U_n:d_n\rangle) =
  d, \text{ s. t. }\\
  (\imath(\langle U_1:d_1,\ldots,U_n:d_n\rangle),d)\in
  Q_{R\!N}^\Jmc,
\end{multline}
by $\gamma_{\textit{lobj}}(R\!N)$, both $Q_{R\!N}$ and its inverse are
interpreted as a functional roles by \Jmc, thus
the function $\ell_{R\!N}$ is as required.\\
It is easy to show by structural induction that the following property holds:
\begin{align}\label{prop:RN}
\text{If } t\in R^\Imc \text{ then } \exists t'\in R\!N^\Imc \text{
  s.t. } 
t=t'[\tau(R)], \text{ for some } R\!N\in\R.
\end{align}
We now show that $\Imc$ is indeed a model of $\KB$, i.e.,
$\Imc\models {C_1\sqsubseteq C_2}$ and
$\Imc\models R_1\sqsubseteq R_2$. As before, since
$\Jmc\models {C_1^\dag\sqsubseteq C_2^\dag}$ and
$\Jmc\models R_1^\dag\sqsubseteq R_2^\dag$, it is enough to show the
following:
  \begin{itemize}
  \item $d\in \Int{C} \text{ iff } d\in (C^\dag)^\Jmc$, for all \DLRpm\ concepts;
  \item $t\in \Int{R} \text{ iff } \imath(t)\in (R^\dag)^\Jmc$, for all \DLRpm\ relations.
  \end{itemize}
  The proof is by structural induction. The base cases are trivially
  true. Similarly for the boolean operators and global
  reification. We thus show only the following cases.\\
  Let $d\in(\lreif R\!N)^\Imc$. Then, $d=\ell_{R\!N}(t)$ with
  $t\in R\!N^\Imc$. By induction, $\imath(t)\in A_{R\!N}^\Jmc$ and, by
  $\gamma_{\textit{lobj}}({R\!N})$, there is a $d'\in\Delta^\Jmc$
  s.t. $(\imath(t),d')\in Q_{R\!N}^\Jmc$ and
  $d'\in (A_{R\!N}^l)^\Jmc$.  By~(\ref{eq:lobj}), $d=d'$ and
  thus, $d\in (\lreif R\!N)^{\dag\Jmc}$.
  \\
  Let $d\in(\exists^{\geq q}[{U_i}] R)^{\Imc}$. Then, there are
  different $t_1,\ldots,t_q\in R^\Imc$ s.t. $t_l[U_i]=d$, for all
  $l=1,\ldots,q$. For each $t_l$, by~(\ref{prop:RN}), there is a
  $t'_l\in R\!N^\Imc \text{ s.t. } t_l=t'_l[\tau(R)]$, for some
  $R\!N\in\R$, while, by induction, $\imath(t_l)\in R^{\dag \Jmc}$ and
  $\imath(t'_l)\in R\!N^{\dag\Jmc}$. Thus, $t'_l[U_i]=t_l[U_i]=d$ and,
  by~(\ref{eq:RN}),
  $(\imath(t'_l),d)\in (\pth{\tau(R\!N)}{\{U_i\}}^\dag)^\Jmc$ while,
  by~(\ref{eq:iota}),
  $(\imath(t'_l),\imath(t_l))\in
  (\pth{\tau(R\!N)}{\tau(R)})^{\dag\Jmc}$.
  Since \DLRpm allows only for knowledge bases with a projection
  signature graph being a multitree, then,
  $$\pth{\tau(R\!N)}{\{U_i\}}^\dag =
  \pth{\tau(R\!N)}{\tau(R)}^\dag\chain \pth{\tau(R)}{\{U_i\}}^\dag.$$
  Thus, $(\imath(t_l),d)\in (\pth{\tau(R)}{\{U_i\}}^\dag)^\Jmc$ and,
  since $\imath$ is injective, then, $\imath(t_l)\neq \imath(t_j)$
  when $l\neq j$. Thus,  $d\in(\exists^{\geq q}[{U_i}] R)^{\dag\Jmc}$.\\
  Let $t\in(\selects{U_i}{C}{R})^\Imc$. Then, $t\in R^\Imc$ and
  $t[U_i]=d\in C^\Imc$. By induction, $\imath(t)\in R^{\dag\Jmc}$ and
  $d\in C^{\dag\Jmc}$. As before, by~(\ref{eq:RN}),~(\ref{eq:iota}) and~(\ref{prop:RN}),
  we can show that
  $(\imath(t),d)\in (\pth{\tau(R)}{\{U_i\}}^\dag)^\Jmc$ and, since
  $\pth{\tau(R)}{\{U_i\}}^{\dag}$ is functional, then
  $\imath(t)\in(\selects{U_i}{C}{R})^{\dag\Jmc}$.
  \\
  Let $t\in(\exists[U_1,\ldots,U_k] R)^{\Imc}$. Then, there is a tuple
  $t'\in R^\Imc$ s.t. $t'[U_1,\ldots,U_k]=t$ and, by induction,
  $\imath(t')\in R^{\dag\Jmc}$. As before,
  by~(\ref{eq:iota}) and (\ref{prop:RN}), we can show that
  $(\imath(t'),\imath(t))\in
  \pth{\tau(R)}{\{U_1,\ldots,U_k\}}^{\dag\Jmc}$ and thus
  $\imath(t)\in(\exists[U_1,\ldots,U_k] R)^{\dag\Jmc}$.\\
  All the other cases can be proved in a similar way. We now show the
  vice versa.

  \smallskip

  Let $d\in(\lreif R\!N)^{\dag\Jmc}$. Then, $d\in (A_{R\!N}^l)^\Jmc$
  and, by $\gamma_{\textit{lobj}}({R\!N})$, there is a
  $d'\in\Delta^\Jmc$ s.t. $(d',d)\in Q_{R\!N}^\Jmc$ and
  $d'\in A_{R\!N}^\Jmc$. By induction, $d'=\imath(t')$ with
  $t'\in {R\!N}^\Imc$ and thus, $(\imath(t'),d)\in Q_{R\!N}^\Jmc$ and,
  by~(\ref{eq:lobj}), $\ell_{R\!N}(t') = d$, i.e., $d\in(\lreif R\!N)^\Imc$.
  \\
  Let $d\in(\exists^{\geq q}[{U_i}] R)^{\dag \Jmc}$.
  Thus, there are different $d_1,\ldots,d_q\in\Delta^\Jmc$ s.t.
  $(d_l,d)\in (\pth{\tau(R)}{\{U_i\}}^\dag)^{\Jmc}$ and
  $d_l\in R^{\dag \Jmc}$, for $l=1,\ldots,q$. By induction, each
  $d_l=\imath(t_l)$ and $t_l\in R^\Imc$. Since $\imath$ is injective,
  then $t_l\neq t_j$ for all $l,j=1,\ldots,q$, $l\neq j$. We need to
  show that $t_l[U_i] = d$, for all
  $l=1,\ldots,q$. By~(\ref{prop:RN}), there is a
  $t'_l\in R\!N^\Imc \text{ s.t. } t_l=t'_l[\tau(R)], \text{ for some
  } R\!N\in\R$ and, by~(\ref{eq:iota}),
  $(\imath(t'_l),\imath(t_l))\in
  (\pth{\tau(R\!N)}{\tau(R)}^\dag)^\Jmc$.
  Since $(\imath(t_l) ,d) \in (\pth{\tau(R)}{\{U_i\}}^\dag)^\Jmc$ and
  $\textsc{path}_{\mathscr{T}}$ is functional in \DLRpm,
  then, $(\imath(t'_l) ,d) \in (\pth{\tau(R\!N)}{\{U_i\}}^\dag)^\Jmc$
  and, by~(\ref{eq:RN}), $t'_l[U_i]=t_l[U_i] =d$.\\
  Let $\imath(t)\in(\selects{U_i}{C}{R})^{\dag\Jmc}$. Thus,
  $\imath(t)\in R^{\dag \Jmc}$ and, by induction, $t\in R^\Imc$. Let
  $t[U_i]=d$. We need to show that $d\in C^\Imc$. As before,
  by~(\ref{prop:RN}) and~(\ref{eq:iota}), we have that
  $(\imath(t),d)\in (\pth{\tau(R)}{\{U_i\}}^\dag)^\Jmc$. Then
  $d\in C ^{\dag\Jmc}$ and, by induction, $d\in C ^{\Imc}$.\\
  Let $\imath(t)\in(\exists[U_1,\ldots,U_k] R)^{\dag\Jmc}$. Then, there is
  $d\in\Delta^\Jmc$ s.t.
  $$(d,\imath(t)) \in (\pth{\tau(R)}{\{U_1,\ldots,U_k\}}^\dag)^\Jmc$$ and
  $d\in R^{\dag\Jmc}$.  By induction, $d=\imath(t')$ and
  $t'\in R^\Imc$. As before, by~(\ref{eq:iota}) and (\ref{prop:RN}), we
  can show that there is a tuple $t''\in R\!N$ s.t. $(\imath(t''),\imath(t))\in
  (\pth{\tau(R\!N)}{\{U_1,\ldots,U_k\}}^\dag)^\Jmc$ and thus,
  $t=t'[U_1,\ldots,U_k]$, i.e., $t\in(\exists[U_1,\ldots,U_k] R)^{\Imc}$.
 \hfill\qed
\end{enumerate}

\end{proof}

As a direct consequence of the above theorem and the fact that \DLR is a sublanguage of \DLRpm, we have that

\begin{corollary}
  Reasoning in \DLRpm is an \ExpTime-complete problem.
\end{corollary}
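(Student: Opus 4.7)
The corollary asserts two bounds, so the plan is to establish the upper bound via the translation built in Theorem~\ref{th:sat} and the lower bound via a known hardness result that transfers through a sublanguage inclusion. Because Proposition~1 reduces all the standard reasoning tasks (KB satisfiability, concept/relation satisfiability, entailment) to concept satisfiability within \DLRpm itself, it suffices to argue both bounds for concept satisfiability.

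For the upper bound, I would invoke Theorem~\ref{th:sat}: a \DLRpm knowledge base \KB is satisfiable iff the \ALCQI knowledge base $\gamma(\KB)$ is satisfiable. The key observation I would record explicitly is the complexity budget of the translation. The number of reified concept names $A^{\tau_i}_{R\!N}$ is bounded by $|\Rmc|\cdot|\mathscr{T}|$, and the projection signature $\mathscr{T}$ itself is bounded by the number of relation names plus attributes plus occurrences of projection subexpressions in \Tmc; the disjointness axioms $\gamma_\textit{dsj}$ introduce at most quadratically many inclusions; $\gamma_{\textit{rel}}$ and $\gamma_{\textit{lobj}}$ contribute linearly many axioms per relation; and the mapping $\cdot^\dag$ on concept/relation expressions replaces each projection construct by a role chain whose length is bounded by the depth of the (multitree) projection signature graph, which is at most $|\mathscr{T}|$. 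Under the standard binary coding of numeric parameters this all stays polynomial in $|\KB|$. Combined with \ExpTime-completeness of \ALCQI reasoning~\cite{BCMNP03}, concept satisfiability in \DLRpm is decidable in \ExpTime.

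For concept satisfiability in \DLR there is the converse mapping already noted in the paper (``the reverse mapping from \ALCQI knowledge bases to \DLR knowledge bases is well known''), so \ALCQI reasoning reduces in polynomial time to \DLR reasoning and hence is already \ExpTime-hard~\cite{calvanese:et:al:98b,BCMNP03}. Since \DLR is a sublanguage of \DLRpm (its projection signature graph has depth at most $1$, so the multitree condition is trivially satisfied and the cardinality restriction on paths of length greater than~$1$ is vacuous), every \DLR knowledge base is a \DLRpm knowledge base with the same semantics; thus \ExpTime-hardness lifts to \DLRpm.

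The only step that requires any care, and which I would single out as the main point to verify, is the polynomiality of $\gamma(\KB)$: one has to check that the role chains produced by $\pth{\tau(R)}{\{U_1,\ldots,U_k\}}^\dag$, which are introduced inside cardinality restrictions $\card{q}{\cdot}{R^\dag}$ by the mapping in Figure~\ref{fig:themapping}, are admissible in \ALCQI thanks to the proviso of \DLRpm that restricts $q>1$ to paths of length~$1$. Apart from this, every remaining ingredient is either immediate from Theorem~\ref{th:sat} or a direct citation, so the corollary follows by combining polynomial-time reductions in both directions with the known complexity of \ALCQI.
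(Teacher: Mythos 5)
Your proposal is correct and follows the same route as the paper: the upper bound comes from Theorem~\ref{th:sat} together with the polynomial size of $\gamma(\KB)$ and \ExpTime-completeness of \ALCQI, and the lower bound from \DLR being a sublanguage of \DLRpm with the well-known reverse mapping from \ALCQI. Your added care about the proviso restricting $q>1$ to paths of length~$1$ (so the translated cardinalities stay within \ALCQI) is exactly the point the paper itself flags, so there is nothing to correct.
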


%%%%%%%%%%%%%%%%%%%%%%%%%%%%%%%%%%%%%%%%%%%%%%%%%%%%%%%%%%%%%%%%%%%%%%
%%%%%%%%%%%%%%%%%%%%%%%%%%%%%%%%%%%%%%%%%%%%%%%%%%%%%%%%%%%%%%%%%%%%%%

\section{Acknowledgements}
We thank Alessandro Mosca for working with us on all the preliminary work necessary to understand how to get these technical results.

%%%%%%%%%%%%%%%%%%%%%%%%%%%%%%%%%%%%%%%%%%%%%%%%%%%%%%%%%%%%%%%%%%%%%%
%%%%%%%%%%%%%%%%%%%%%%%%%%%%%%%%%%%%%%%%%%%%%%%%%%%%%%%%%%%%%%%%%%%%%%

\bibliographystyle{named}
%%\bibliography{short-string,krdb}
%%\bibliography{biblio}

\end{document}